\newtheorem{theorem}{Theorem}
\newtheorem{lemma}[theorem]{Lemma}
\begin{document}

\title{Matrix Variate RBM and Its Applications}

\author{Guanglei Qi$^1$, Yanfeng Sun$^1$, Junbin Gao$^2$, Yongli Hu$^1$ and Jinghua Li$^1$\\
{\small $^1$College of Metropolitan Transportation, Beijing University of Technology, Beijing 100124, China}\\
{\small $^2$Business Analytics Discipline, The University of Sydney Business School, Camperdown NSW 2006, Australia}\\
{\tt\small qgl@emails.bjut.edu.cn; \{huyongli,yfsun,lijinghua\}@bjut.edu.cn; junbin.gao@sydney.edu.au}}

\maketitle

\begin{abstract}
Restricted Boltzmann Machine (RBM) is an important generative model modeling vectorial data. While applying an RBM in practice to images, the data have to be vectorized. This results in high-dimensional data and valuable spatial information has got lost in vectorization. In this paper, a  Matrix-Variate Restricted Boltzmann Machine (MVRBM) model is proposed by generalizing the classic RBM to explicitly model matrix data. In the new RBM model, both input and hidden variables are in matrix forms which are connected by bilinear transforms. The MVRBM has much less model parameters, resulting in a faster training algorithm while retaining comparable performance as the classic RBM. The advantages of the MVRBM have been demonstrated on two real-world applications: Image super-resolution and handwritten digit recognition.

\end{abstract}

\section{Introduction}\label{Sec:1}
A Boltzmann machine as a type of stochastic recurrent neural network was invented by Hinton and Sejnowski in 1985~\cite{HintonSejnowski1983}. However it is not efficient to use the generic Boltzmann machines in machine learning or inference due to its unconstrained connectivity among variable units. To make a practical model, Hinton~\cite{Hinton2002} proposes an architecture called the \textit{Restricted Boltzmann Machine} (RBM), only units between visible layer and hidden layer connected.

With the restricted connectivity between visible and hidden units, an RBM can be regarded as a probabilistic graphical model with bipartite graph structure. In recent years, RBMs have attracted considerable research interest in pattern recognition~\cite{Bishop2006,SocherChenManningAndrew2013} and machine learning~\cite{Bengio2009,HintonSalakhutdinov2009,KrizhevskyHinton2010,MemisevicHinton2010,TielemanHinton2009}, due to their strong ability in feature extraction and representation.

Units at visible and hidden layers are connected through the restricted linear mapping with weights to be trained. Given some training data, the goal of training a RBM model is to learn the weights between visible and hidden units such that the probability distribution represented by a RBM fits the training samples as well as possible. A well trained RBM can provide efficient representation for new input data following the  same distribution as training data.

The classic RBM model is mainly designed for vectorial input data or variables. However, data emerging from modern science and technology are in more general structures. For example, digital images are collected as 2D matrices, which reflect the spatial correlation or information among pixels.  In order to apply the classic RBM to such 2D image data, a typical workaround  is to vectorize 2D data. Unfortunately such as a vectorization process not only breaks the inherent high-order image structure, resulting in losing important information about interaction across modes, but also leads to increasing the number of model parameters induced by a full connection between visible and hidden units.

To extend the classic RBM for 2D matrix data, in this paper, we propose a Matrix-Variate Restricted Boltzmann Machine (MVRBM) model. Like the classic RBM, the MVRBM model also defines a probabilistic model for binary units arranged in a bipartite graph, but topologically units on the same layer (input or hidden) are organized in 2D arrays and connected through a bilinear mapping, see Section~\ref{Sec:4}. In fact, the proposed bilinear mapping specifies a specific structure in the parameters of the model, thus gives raise to reduce the number of parameters to be learned in training process.

In summary, the new model has the following advantages which make up our contributions in this paper:
\begin{enumerate}
\item The total number of parameters to be learned is significantly less than that in the traditional RBMs, thus the computational complexity in training and inferring can be significantly improved.

\item Both the visible layer and hidden layer are organized in the matrix format, thus the spatial information in 2D matrix data can be maintained in the training and inference processes and better performance in reconstruction can be achieved.

\item The idea presented in MVRBM can be easily extended to any order tensorial data, thus the basic RBM can be applied to more complex data structures.
\end{enumerate}

The rest of the paper is organized as follows. In Section \ref{Sec:2}, we summarize the related works to further highlight our contributions.  In Section \ref{Sec:4}, the MVRBM model is introduced and a stochastic learning algorithm based on \textit{Contrast Divergence} (CD) is proposed. In Section \ref{Sec:5}, the performance of the proposed method is evaluated on two computer vision tasks handwritten digit recognition and image super-resolution. Finally, conclusions and suggestions for future work are provided in Section~\ref{Sec:6}.

\section{Related Works}\label{Sec:2}
There have been more and more multiway data acquired in modern scientific and engineering research, e.g., medical images~\cite{AdaliLevinCalhoun2015,LuHaligWangChenFei2014}, multispectral images~\cite{BernabeMarpuPlazaMuraBenediktsson2014,Garzelli2015}, and video clips~\cite{GuyByrneRich2014} etc. 
It is well known that vectorizing multiway data results in correlation information loss, thus downgrade the performance of learning algorithm for vectorial data like the classic RBMs. In recent years, research works on learning algorithms for multiway data modeling have attracted great attention.

Rovid et al.~\cite{RovidSzeidlVarlaki2011} propose a tensor-product model based representation of neural networks in which a neural network structure for mutual interaction of variable components is introduced. It conceptually restructures the tensor product model transformation to a generalized form for fuzzy modeling and to propose new features and several new variants of the tensor product model transformation. However this type of neural networks is actually defined for vectorial data rather than tensorial variates. The similar idea can be seen in the most recent paper~\cite{HutchinsonDengYu2013}. The key characterization for these networks is the connection weights (neural networks parameters) are in tensor format, rather than the data variables in the networks. Thus except for the nonlinearity introduced by the activation function, the neural networks offer the capacity of encoding nonlinear interaction among the hidden variable components. The so-called tensor analyzer~\cite{TangSalakhutdinovHinton2013} also serves as such an example.

Socher et al.~\cite{SocherChenManningAndrew2013} present another similar work. It uses the similar structure as proposed in~\cite{HutchinsonDengYu2013} to generalize several previous neural network models and provide a more powerful way to model correlation information than a standard neural network layer.

There are several works on multiple ways Boltzmann machine~\cite{TaylorHinton2009,ZhaoAmmarRoos2013}. Taylor and  Hinton~\cite{TaylorHinton2009} propose a factored conditional restricted Boltzmann Machines for modeling motion style. In order to capture context of motion style, this model takes  history and current information as input data, thus connections from the past to current visible units and the hidden units are increased. In this model, the input data consist of two vectors, and the output data is also in vector form and the weights between visible and hidden units are matrix.

Zhao et al.~\cite{ZhaoAmmarRoos2013} use some video sequences for training RBM to get better classification performance. The video sequences are also vectorized as some vectors, used as the input to a classic RBM with a connection defined by a tensor weight.

All the above attempts aim to model the interaction between the components of vector variates. Similar to the classic RBM, these models are not appropriate for the matrix inputs. To the best of our knowledge, the first work aiming at modeling matrix variate inputs is proposed by Nguyen et al.~\cite{NguyenTranPhungVenkatesh2015}, named \textit{Tensor-variate Restricted Boltzmann Machines} (TvRBM) model. The authors have demonstrated the capacity of the model on three real-world applications with convincible performance.
In its model architecture, the input is designed for tensorial data including matrix data while the hidden units are organized as a vector. The connection between the hidden layer and the visible layer is defined by the linear combination over tensorial weights. To reduce the number of weight parameters, the weight tensors are further specified as the so-called rank-r tensors~\cite{KoldaBader2009}. However our criticism over TvRBM is that the specification of the rank-r tensor weights is too restrictive to efficiently empower the model capability.

Another model related to our proposed MVRBM is the so-called \textit{Replicated Softmax RBM} (RS-RBM)~\cite{SalakhutdinovHinton2010}. Similar to TvRBM in~\cite{NguyenTranPhungVenkatesh2015}, RS-RBM uses a linear mapping between a matrix input layer and a hidden vector layer. To model document topics in terms of word counts, an implicit condition is imposed on the matrix input, i.e., the sum of the binary entries of each row in the matrix input must be 1. Thus the Replicated Softmax model is actually equivalent to an RBM of vector softmax input units with identical weights for each unit.

Our proposed MVRBM in the next section is different from both TvRBM and RS-RBM in several aspects. First, the binary entries in matrix input for MVRBM are independent as that in TvRBM while they are dependent in RS-RBM. Second, the hidden layer units in MVRBM are in a matrix format rather than in a vector format as in both RS-RBM and TvRBM. Third, the linear mapping between input and hidden layers in MVRBM is bilinear.

\section{Matrix Variate Restricted Boltzmann Machines (MVRBM)}\label{Sec:4}
In this section, we will present the proposed MVRBM and investigate its learning algorithm.

\subsection{Model Definition}

The classic RBM~\cite{FischerIgel2012,HintonSalakhutdinov2006} is a bipartite undirected probabilistic graphical model with stochastic visible units $\mathbf x$ and stochastic hidden units $\mathbf y$, both are in vector. The model is shown in Figure~\ref{figure1} where each visible unit (represented by a cubic) is connected to each hidden unit (represented by a cylinder).
\begin{figure}[H]
\centering
\includegraphics[width=0.3\textwidth]{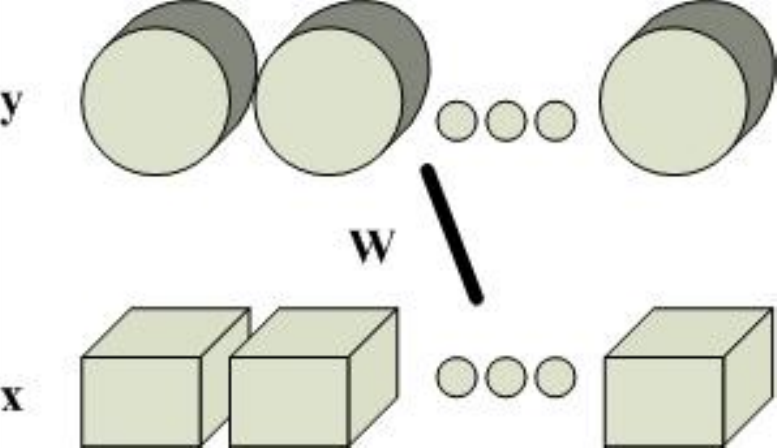}
\caption{Graphical Illustration of RBM.}
\label{figure1}
\end{figure}

The RBM assigns energy for a joint configuration $(\mathbf x,\mathbf y)$:
\begin{equation}
E(\mathbf x,\mathbf y;\Theta ) =  -\mathbf x^T W\mathbf y - \mathbf b^T\mathbf x - \mathbf c^T \mathbf y, \label{Energy1}
\end{equation}
where $\mathbf x \in\mathbb{R}^I$, $\mathbf y\in\mathbb{R}^K$ are the binary states of visible units   and hidden units, $\mathbf b\in \mathbb{R}^I$ and $\mathbf c\in\mathbb{R}^K$ are the biases, and $W \in {\mathbb{R}^{I \times K}}$ represents visible-to-hidden symmetric interaction terms in the neural network. Denote by $\Theta  = \left\{ \mathbf b,\mathbf c, W \right\}$ all the model parameters.

To introduce our proposed MVRBM, we define the following notations. Denote by $X = [x_{ij}]\in\mathbb{R}^{I\times J}$ the binary visible matrix variate, and $Y=[y_{kl}]\in\mathbb{R}^{K\times L}$ the binary hidden matrix variate. We assume that the independent random variables $x_{ij}$ and $y_{kl}$ all take values from $\{0,\ 1\}$. Given the parameters of a 4-order tensor $\mathcal{W}=[w_{ijkl}]\in\mathbb{R}^{I\times J\times K\times L}$, bias matrices $B=[b_{ij}]\in\mathbb{R}^{I\times J}$ and $C=[c_{kl}]\in\mathbb{R}^{K\times L}$, we define the following energy function for a joint configuration $(X, Y)$,
\begin{equation}
\begin{aligned}
 E(X, Y; \Theta) &= \sum\limits_{i = 1}^I {\sum\limits_{j = 1}^J {\sum\limits_{k = 1}^K {\sum\limits_{l = 1}^L {{x_{ij}}{w_{ijkl}}{y_{kl}}} } } }  \\
        &+ \sum\limits_{i = 1}^I {\sum\limits_{j = 1}^J {{x_{ij}}{b_{ij}}} }  + \sum\limits_{k = 1}^K {\sum\limits_{l = 1}^L {{y_{kl}}{c_{kl}}}},
\end{aligned}\label{equation3}
\end{equation}
where $\Theta = \{\mathcal{W}, B, C\}$ collects all the parameters.

There are a total number of $I\times J\times K\times L + I\times J+ K\times L$ free parameters in $\Theta$. This is a huge number even for mild values of $I$, $J$, $K$ and $L$ and requires a large amount of training samples and times. In order to reduce the number of free parameters to save computational complexity in training and inference, we intend to specify a multiplicative interaction between visible units and hidden units by taking $w_{ijkl}=u_{ki} v_{lj}$. By defining two new matrices $U=[u_{ki}]\in\mathbb{R}^{K\times I}$ and $V=[v_{lj}]\in\mathbb{R}^{L\times J}$, we can re-write the
energy function \eqref{equation3} into the following form,
\begin{equation}
 E(X, Y; \Theta)
  =  - \text{tr}(U^T Y V X^T) - \text{tr}(X^TB) - \text{tr}(Y^TC). \label{equation4}
\end{equation}
Both matrices $U$ and $V$ jointly define the interaction between input matrix $X$ and hidden matrix $Y$. The total number of free parameters in \eqref{equation3} has been reduced to  $I\times K+ L\times J + I\times J + K\times L$ in \eqref{equation4}.

Based on \eqref{equation4}, we define the following distribution:
\begin{equation}
p(X, Y; \Theta) = \frac{1}{{Z(\Theta )}}\exp \left\{ { - E(X, Y; \Theta)} \right\},\label{equation5}
\end{equation}
where $\Theta$ denotes all the model parameters $U$, $V$, $B$ and $C$ and the normalization constant $ Z(\Theta ) $ is defined by
\begin{equation}
Z(\Theta ) = \sum\limits_{X \in \mathcal{X}, Y \in \mathcal{Y}} {\exp \left\{ { - E(X, Y; \Theta)} \right\}},
\end{equation}
where $\mathcal{X}$ and $\mathcal{Y}$  are the binary value spaces of $X$ and $Y$, respectively.

We call the probabilistic model defined by \eqref{equation5} the Matrix Variate RBM (MVRBM). The model is shown in Figure~\ref{Figure2}.
\begin{figure}[H]
\centering
\includegraphics[width=0.48\textwidth]{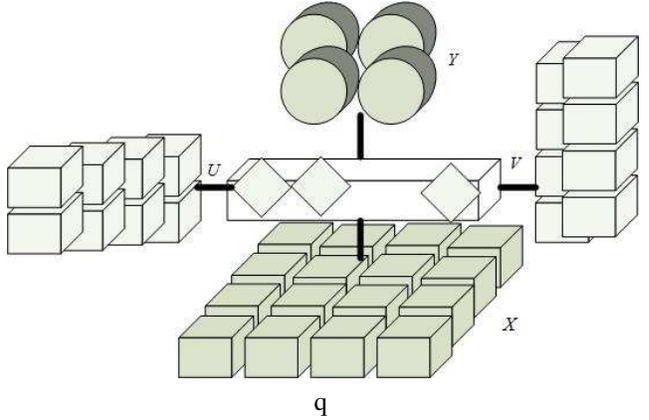}
q\caption{Graphical Illustration of MVRBM}
\label{Figure2}
\end{figure}

To facilitate exploring learning algorithm for MVRBM, we propose the following lemma regarding the conditional density of each visible and hidden entry.
\begin{lemma}Let the MVRBM model be defined by \eqref{equation4} and \eqref{equation5}, the conditional density of each visible entry $x_{ij}$ over all the other variables is given by
\begin{equation}
p({x_{ij}} = 1\left|Y; \Theta \right.) = \sigma ({b_{ij}} + \sum\limits_{k = 1}^K {\sum\limits_{l = 1}^L {{y_{kl}}{u_{ki}}{v_{lj}}} } ) \label{equation9}
\end{equation}
and the conditional density of each hidden entry $y_{kl}$ over all the other variables is given by
\begin{equation}
p({y_{kl}} = 1\left|X; \Theta \right.) = \sigma ({c_{kl}} + \sum\limits_{i = 1}^I {\sum\limits_{j = 1}^J {{x_{ij}}{u_{ki}}{v_{lj}}} } ),\label{equation9a}
\end{equation}
where $\sigma$ is the sigmoid function $\sigma (x) = 1/(1 + {e^{ - x}}).$
\end{lemma}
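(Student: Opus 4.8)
The plan is to exploit the bipartite structure of the energy in \eqref{equation4}: because it contains no term coupling two visible entries (nor two hidden entries), the conditional distribution of $X$ given $Y$ factorizes over the entries $x_{ij}$, which reduces the whole computation to normalizing a single two-state Bernoulli variable.

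First I would expand the trace $\text{tr}(U^TYVX^T)$ into index form. Using $(U^TYV)_{ij} = \sum_{k=1}^K\sum_{l=1}^L u_{ki}y_{kl}v_{lj}$ together with $\text{tr}(U^TYVX^T) = \sum_{i,j}(U^TYV)_{ij}x_{ij}$ and $\text{tr}(X^TB)=\sum_{i,j}x_{ij}b_{ij}$, the energy becomes
$$E(X,Y;\Theta) = -\sum_{i=1}^I\sum_{j=1}^J x_{ij}\Bigl(b_{ij} + \sum_{k=1}^K\sum_{l=1}^L y_{kl}u_{ki}v_{lj}\Bigr) - \text{tr}(Y^TC).$$
Writing the effective field $\alpha_{ij} = b_{ij} + \sum_{k,l} y_{kl}u_{ki}v_{lj}$, this is affine in each $x_{ij}$ with no cross terms $x_{ij}x_{i'j'}$, which is precisely the structural fact I need.

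Next I would form the conditional $p(X\mid Y;\Theta)$ from the joint \eqref{equation5}. Since $p(X\mid Y) = p(X,Y)/\sum_{X'}p(X',Y)$, the partition function $Z(\Theta)$ cancels, and because $\text{tr}(Y^TC)$ is independent of $X$ it factors out of both numerator and denominator. What remains is $p(X\mid Y) \propto \exp\bigl(\sum_{i,j}x_{ij}\alpha_{ij}\bigr) = \prod_{i,j}\exp(x_{ij}\alpha_{ij})$, exhibiting the conditional independence of the $x_{ij}$ given $Y$. Restricting to one entry and summing over its admissible states $x_{ij}\in\{0,1\}$ yields $p(x_{ij}=1\mid Y) = e^{\alpha_{ij}}/(1+e^{\alpha_{ij}}) = \sigma(\alpha_{ij})$, which is exactly \eqref{equation9}. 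The hidden-unit formula \eqref{equation9a} follows by the same argument: regrouping the coupling term as $\sum_{k,l} y_{kl}\bigl(\sum_{i,j} x_{ij}u_{ki}v_{lj}\bigr)$ shows the energy is also affine in each $y_{kl}$ with effective field $c_{kl}+\sum_{i,j}x_{ij}u_{ki}v_{lj}$, and the identical factorize-then-normalize step produces the sigmoid.

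I expect the trace bookkeeping to be purely routine; the conceptual crux is recognizing that the absence of intra-layer coupling guarantees the product factorization, so that normalizing a single Bernoulli unit faithfully recovers the full conditional. The one point to handle with care is respecting the index matching $w_{ijkl}=u_{ki}v_{lj}$ throughout, so that the coefficient multiplying $x_{ij}$ is exactly $\sum_{k,l}y_{kl}u_{ki}v_{lj}$ and not a transposed or mismatched variant.
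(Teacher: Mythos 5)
Your proposal is correct and follows essentially the same route as the paper: both expand the trace to see that the energy is affine in each $x_{ij}$ with effective field $b_{ij}+\sum_{k,l}y_{kl}u_{ki}v_{lj}$, exploit the resulting factorization of the conditional over entries (the paper does this by computing $p(Y)$ and $p(x_{i_0j_0}=1,Y)$ explicitly as products of $1+\exp(m_{ij})$ and taking the Bayes ratio), and then normalize a single two-state Bernoulli to obtain the sigmoid. The only cosmetic difference is that you cancel $Z(\Theta)$ and the $X$-independent bias factor up front rather than carrying them through both numerator and denominator.
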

\begin{proof} Both $X$ and $Y$ are in symmetric position, so we only prove \eqref{equation9} as an example. For this purpose, denote
\[
m_{ij} = b_{ij} + \sum_{k = 1}^K\sum_{l = 1}^L y_{kl}u_{ki} v_{lj}.
\]
Consider a fixed entry $x_{i_0j_0}$.
First by Bayes theorem we have
\[
p(x_{i_0j_0}=1|Y; \Theta) = \frac{p(x_{i_0j_0}=1, Y; \Theta)}{p(Y; \Theta)}.
\]
Clearly
\begin{align*}
p(Y; \Theta) & = \frac1{Z(\Theta)}\sum_{X\in\mathcal{X}}\exp\{-E(X,Y; \Theta)\} \\
&=\frac{\exp(\text{tr}(Y^TV))}{Z(\Theta)}\sum_{X\in\mathcal{X}}\prod_{(i,j)} \exp(m_{ij}x_{ij}) \\
&=\frac{\exp(\text{tr}(Y^TV))}{Z(\Theta)} \prod_{(i,j)} \left(1 + \exp(m_{ij})\right).
\end{align*}
Similarly, denoting by $X_{/i_0j_0}$ all the entries except for $x_{i_0j_0}$, we have
\begin{align*}
& p(x_{i_0j_0}=1, Y; \Theta) \\
= & \frac{\exp(\text{tr}(Y^TV))}{Z(\Theta)}\exp(m_{i_0j_0}) \sum_{X_{/i_0j_0} \in\mathcal{X}} \prod_{(i,j)\not=(i_0j_0)} \exp ( m_{ij}x_{ij} )\\
=&\frac{\exp(\text{tr}(Y^TV))}{Z(\Theta)}\exp(m_{i_0j_0}) \prod_{(i,j)\not=(i_0,j_0)} \left(1 + \exp(m_{i_0j})\right).
\end{align*}
Combining these together gives
\[
p(x_{i_0j_0}=1|Y; \Theta) = \frac{\exp(m_{i_0j_0})}{1 +\exp(m_{i_0j_0}) } = \sigma(m_{i_0j_0}),
\]
which is \eqref{equation9} for $(i,j) = (i_0,j_0)$. This completes the proof.
\end{proof}

\textit{Remark 1:} In terms of matrix representation, the two conditional probabilities can be written as
\begin{align}
&p(X = 1 | Y; \Theta) = \sigma(U^T Y V + B), \label{ConditionX}\\
&p(Y = 1 | X; \Theta) = \sigma(UXV^T + C),\label{ConditionY}
\end{align}
where the sigmoid function $\sigma$ applies on the entries of the corresponding matrices.

\subsection{The Maximum Likelihood and CD Algorithm for MVRBM}

Let $\mathcal{D} = \{X_1, ..., X_N\} $ be an observed dataset. Under the joint distribution \eqref{equation5}, the log likelihood of  $\mathcal{D}$ is defined by
\[\ell  = \frac{1}{N}\sum\limits_{n = 1}^N {\log (\sum\limits_{Y\in \mathcal{Y}} {\exp \left\{ { - E(X_n, Y)} \right\}} )}  - \log Z(\Theta ).\]

For any component $\theta $ of $\Theta $ , we can prove that
\begin{equation}
\begin{aligned}
\frac{{\partial \ell }}{{\partial \theta }} =& - \frac{1}{N}\sum\limits_{n = 1}^N {\sum\limits_{Y \in \mathcal{Y}} {p(Y\left| {X_n;\Theta} \right.)} \frac{{\partial E(X_n, Y; \Theta)}}{{\partial \theta }}}  \\
&+ \sum\limits_{X' \in \mathcal{X}, Y' \in \mathcal{Y}} {p(X', Y'; \Theta)\frac{\partial E(X', Y'; \Theta)}{\partial \theta}}.
\end{aligned}\label{equation7}
\end{equation}
We call the first term of the right hand side of \eqref{equation7} the data expectation and the second term the model expectation.

The main difficulty in calculating the derivative of the likelihood function with respect to a parameter is to fast compute the model expectation. The model expectation is intractable due to the summation over all the possible visible and hidden states. However, the \textit{Contrast Divergence} (CD) procedure allows fast approximation using short Markov chains. The main idea in the CD algorithm is as follow: a Gibbs chain is initialized with a training example $X^{(0)}_n = X_n$ of the training set, then alternatively using \eqref{ConditionX} and \eqref{ConditionY} gives the chain $\{(X^{(0)}_n, Y^{(0)}_n), (X^{(1)}_n, Y^{(1)}_n), ..., (X^{(k)}_n, Y^{(k)}_n), ...\}$. The CD-k algorithm takes the samples $\{X^{(k)}_n\}^N_{n=1}$ at step $k$ to approximate the model expectation, that is,
\begin{align}
&\sum_{X' \in \mathcal{X}, Y' \in \mathcal{Y}} p(X', Y'; \Theta)\frac{\partial E(X', Y'; \Theta)}{\partial \theta} \notag\\
=& \sum_{X'\in\mathcal{X}}\left(\sum_{Y'\in\mathcal{Y}} p(Y' | X';\Theta)\frac{\partial E(X', Y'; \Theta)}{\partial \theta}\right)p(X';\Theta) \notag\\
\approx & \frac1N\sum^N_{n=1}\sum_{Y'\in\mathcal{Y}}p(Y' | X^{(k)}_n;\Theta)\frac{\partial E(X^{(k)}_n, Y'; \Theta)}{\partial \theta}.\label{ApproximatedExpectation}
\end{align}
\eqref{ApproximatedExpectation} is actually the data expectation over the sampled data in the $k$-th step of all the Gibbs chains from training samples, which is similar to the data expectation in \eqref{equation7}. Finally the CD algorithm is implemented by
\begin{align}
\frac{\partial \ell}{\partial \theta}\approx & -\frac1N\sum^N_{n=1}\sum_{Y\in\mathcal{Y}}p(Y | X_n;\Theta)\frac{\partial E(X_n, Y; \Theta)}{\partial \theta} \label{ApproxLikelihood}\\
& + \frac1N\sum^N_{n=1}\sum_{Y'\in\mathcal{Y}}p(Y' | X^{(k)}_n;\Theta)\frac{\partial E(X^{(k)}_n, Y'; \Theta)}{\partial \theta}.\notag
\end{align}

For all the four parameters in the MVRBM, we take calculating $\frac{\partial \ell}{\partial U}$ as an example. The derivatives with respect to other parameters can be calculated similarly. From \eqref{equation4}, we have
\[
\frac{\partial E(X,Y;\Theta)}{\partial U} =  - YVX^T.
\]
In this case,  the derivative \eqref{ApproxLikelihood} becomes
\begin{align}
 \frac{\partial \ell}{\partial U} \approx & - \frac{1}{N}\sum\limits_{n = 1}^N \left(\sum\limits_{Y \in \mathcal{Y}} p(Y\left| X_n;\Theta\right.) \right) V X_n^T  \label{equation10}\\
&+\frac1N\sum^N_{n=1} \left(\sum\limits_{ Y' \in \mathcal{Y}} p(Y'\left| X^{(k)}_n;\Theta \right.) Y'\right) V(X^{(k)}_n)^T.\notag
\end{align}

For binary variable $Y$ (or $Y'$ here), the mean value of $Y$ is equal to the probability of $Y=1$.
Hence the first term in \eqref{equation10} is, refer to \eqref{ConditionY},
\begin{equation*}
\sum\limits_{Y \in \mathcal{Y}} p(Y\left| X_n;\Theta\right.) Y = \sigma (U{X_n}{V^T} + C).
\end{equation*}
Similarly it applies to the second term. Thus we have
\begin{align}
\frac{\partial \ell }{\partial U} \approx & - \frac{1}{N}\sum\limits_{n = 1}^N\sigma(UX_nV^T + C) VX^T_n \notag\\
& + \frac1N \sum^N_{n=1} \sigma(UX^{(k)}_nV^T + C) V(X^{(k)}_n)^T.  \label{equation14}
\end{align}

Similarly for other parameters we obtain
\begin{align}
\frac{\partial \ell}{\partial V} \approx & - \frac{1}{N}\sum_{n = 1}^N \sigma(UX_nV^T+C)^TU{X}_n \notag\\
& + \frac{1}{N}\sum\limits_{n = 1}^N \sigma(UX^{(k)}_nV^T+C)^TUX^{(k)}_n, \label{equation15}\\
\frac{\partial \ell}{\partial B} \approx & - \frac{1}{N}\sum_{n = 1}^NX_n  + \frac{1}{N}\sum_{n = 1}^N X^{(k)}_n, \label{partialB}\\
\frac{\partial \ell}{\partial C} \approx & - \frac{1}{N}\sum_{n = 1}^N \sigma(UX_nV^T+C)  \notag \\
&+ \frac{1}{N}\sum_{n = 1}^N \sigma(UX^{(k)}_nV^T+C).
\label{equation17}
\end{align}

\textit{Remark 2:} As the model only depends on the product of parameters $U$ and $V$, one parameter may go up in any scale $s$ while the other goes down to $1/s$. To avoid the issue of un-identifying model parameters, we add a penalty of $\frac{\beta}2(\|U\|^2_F + \|V\|^2_F)$ to the log likelihood objective $\ell$.

We summarize the overall CD procedure for Matrix Variate RBM in Algorithm~\ref{Algorithm1}.
In all our experiments, we use the special CD-$1$ algorithm for training.
\begin{algorithm}
\renewcommand{\algorithmicrequire}{\textbf{Input:}}
\renewcommand\algorithmicensure {\textbf{Output:} }
\caption{CD-$K$ algorithm for MVRBM:}\label{Algorithm1}
\begin{algorithmic}[1]
\REQUIRE A set of training data of $N$ matrices $\mathcal{D} = \{X_1, ..., X_N\}$, the maximum iteration number $T$ (default value $=10,000$), the learning rate $\alpha$ (default value $= 0.05$), the weight regularizer $\beta$ (default value $= 0.01$), the momentum $\gamma$ (default value $= 0.5$), the batch size $b$ (default value $= 100$) and the CD step $K$ (default value $=1$).
\ENSURE  Model parameters  $\Theta=\{U,V,B,C\}$.
\STATE   \textbf{Initialization}: Randomly initialize values for $U$ and $V$, set the bias $B=0$ and $C=0$ and the gradient increments $\Delta U = \Delta V = \Delta B= \Delta C = 0$.
\FOR{iteration step $t=1\rightarrow T$}
\STATE Randomly divide $\mathcal{D}$ into $M$ batches $\mathcal{D}_1, ..., \mathcal{D}_M$ of size $b$, then
\FOR{batch $m=1\rightarrow M$}
\STATE For all the data $X^{(0)}=X \in \mathcal{D}_m$ run the Gibbs sampling at the current model parameters $\Theta$:
\FOR{$k=0\to K-1$}
    \STATE sample  $Y^{(k)}$ according to \eqref{ConditionY} with the current $X^{(k)}$;
    \STATE sample  $X^{(k+1)}$ according to \eqref{ConditionX} with $Y^{(k)}$;
\ENDFOR
\STATE Update the gradient increment with $\mathcal{D}_m$ and $\mathcal{D}^{(K)}_m (X^{(K)})$ by using \eqref{equation14} to \eqref{equation17}:
\begin{align*}
\Delta U &= \gamma \Delta U + \alpha \left(-\left.\frac{\partial \ell}{\partial U}\right|_{\mathcal{D}_m,\mathcal{D}^{(K)}_m} - \beta U\right); \\
\Delta V &= \gamma \Delta V + \alpha \left(- \left.\frac{\partial \ell}{\partial V}\right|_{\mathcal{D}_m,\mathcal{D}^{(K)}_m} - \beta V\right); \\
\Delta B  &= \gamma \Delta B  + \alpha \left(- \left.\frac{\partial \ell}{\partial B}\right|_{\mathcal{D}_m,\mathcal{D}^{(K)}_m}\right); \\
\Delta C  &= \gamma  \Delta C   + \alpha \left(- \left.\frac{\partial \ell}{\partial C}\right|_{\mathcal{D}_m,\mathcal{D}^{(K)}_m}\right);
\end{align*}
\STATE Update model parameters $\theta \in \Theta$ with
\[
\theta \leftarrow \theta + \Delta\theta;
\]
\ENDFOR
\ENDFOR
\end{algorithmic}
\end{algorithm}

\textit{Remark 3:} The MVRBM model can be easily extended to any order tensorial input and hidden units. Note that the energy \eqref{equation4}, which is equivalent to
\[
E(X,Y;\Theta) = - \langle Y, UXV^T\rangle - \langle B, X\rangle - \langle C, Y\rangle,
\]
determines the bilinear mappings between $X$ and $Y$
\[
Y = X\times_1 U\times_2 V + E_y
\text{ and }
X = Y\times_1 U^T\times_2 V^T + E_x,
\]
where $\times_n$ means $n$-mode product of a tensor and a matrix~\cite{KoldaBader2009} and $E$'s are the logistic error matrices. For $D$-order tensorial binary variates $\mathcal{X}$ and $\mathcal{Y}$, a Tucker decomposition~\cite{KoldaBader2009}
$
\mathcal{Y} = \mathcal{X}\times_1 U_1\times_2 \cdots \times_D U_D + \mathcal{E}
$
suggests the following energy function
\[
E(\mathcal{X}, \mathcal{Y}; \Theta) = -\langle \mathcal{Y}, \mathcal{X}\times_1 U_1\times_2 \cdots \times_D U_D\rangle - \langle \mathcal{B}, \mathcal{X}\rangle - \langle \mathcal{C}, \mathcal{Y}\rangle
\]
for a Tensorial Variate RBM (TV-RBM). Algorithm and its applications of such a TV-RBM will be investigated in our forthcoming paper~\cite{QiSunGaoHu2015}.

\subsection{Multimodal MVRBM}
Information in the real world comes through multiple input channels. For example, in image super-resolution, the lower resolution images are associated with different types of features. The classic RBM has been engineered to handle multimodal data~\cite{SalakhutdinovHinton2010}.

As a proof of the concept, in this subsection, we simply give an outline to describe how the newly proposed MVRBM can be generalized to process multimodal matrix variates. We assume that the visible layer consists of two separate matrices $X\in\mathbb{R}^{I\times J}$ and $Z\in\mathbb{R}^{H\times W}$. Both $X$ and $Z$ could be in the same dimension and will be connected to the hidden layer given by a matrix variate $Y\in\mathbb{R}^{K\times L}$. The connection is specified in the following energy function
\begin{align}
 E(X, Z, Y; \Theta) =&  - \text{tr}(U^TYVX^T) - \text{tr}(X^TB) - \text{tr}(Y^TV)  \notag\\
&- \text{tr}(Q^TYRZ^T) - \text{tr}(Z^TA). \label{EnergyMultiview}
\end{align}

Given the energy function in \eqref{EnergyMultiview}, the following joint distribution
\[
p(X,Z,Y;\Theta) = \frac1{Z(\Theta)} \exp \{-E(X, Z, Y; \Theta)\}
\]
defines a graphical model, called \textit{Multimodal} MVRBM (MMVRBM). The learning algorithm based on CD approximation can be easily derived. In experiment, we will use this model for image super-resolution.

\section{Experimental Results and Analysis}\label{Sec:5}
In this section, we implement Algorithm 1 and conduct a number of experiments on some  public databases to assess the proposed MVRBM model. These experiments are designed to demonstrate the performance of MVRBM in feature extraction and reconstruction by comparing with the existing RBM model. The algorithm is coded by Matlab, and is run on a machine with a 2.50GHz Intel Xeon Processor and 128GB of installed memory.

\subsection{Experiment 1: Denoising and Reconstruction}
In the first experiment, our goal is to show that a well trained MVRBM can be used to data denoising and dimension reduction for reconstruction. The experiment is based on the MNIST handwritten digit database, downloadable from  \texttt{http://yann.lecun.com/exdb/mnist/.} The dataset contains 70000 labeled images of handwritten digits which are in 28 by 28 pixels and gray scale. In general, the training set contains 60,000 images and the testing set 10,000 images.

In the first attempt, we wish to demonstrate that the MVRBM model actually learns the information from data. For this purpose, we randomly select 5,000 images of digit 9 from the training dataset, set the hidden matrix variate to size $15\times 15$ and use most of default parameter values set in Algorithm~\ref{Algorithm1}. The training process terminates after $T = 3,000$ epoches.
With the trained MVRBM, we conduct a simple denoising test. We randomly add 10\% salt \& pepper  noises to some testing images of digit 9. The noised testing images are shown in Figure~\ref{Figure1Gao}(a) while the denoised counterparts in Figure~\ref{Figure1Gao}(b). The denoising result is visually pleasing.

\begin{figure}
\centering
\subfigure[noised digits]{\includegraphics[width=0.22\textwidth]{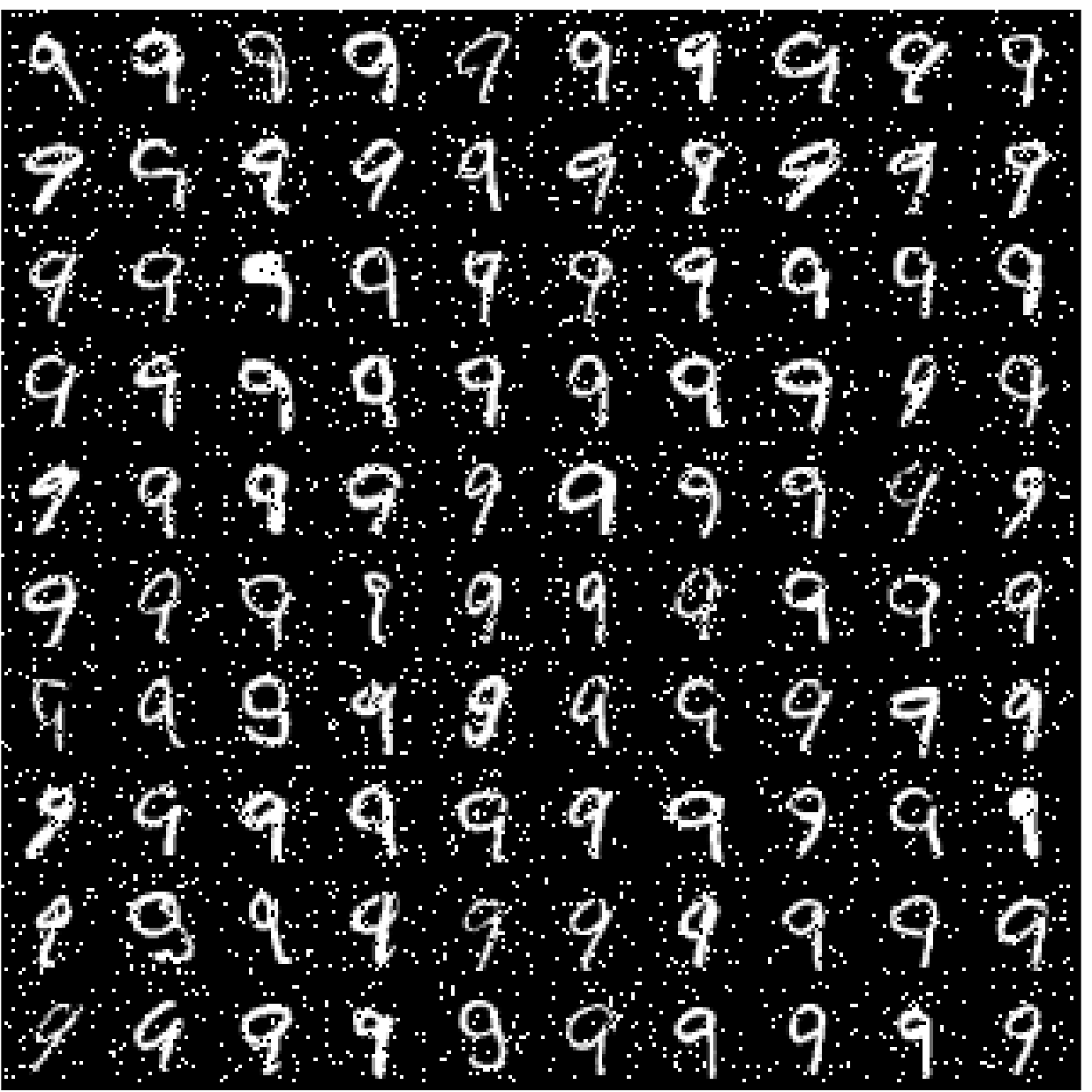}}\;\;
\subfigure[denoised digits]{\includegraphics[width=0.22\textwidth]{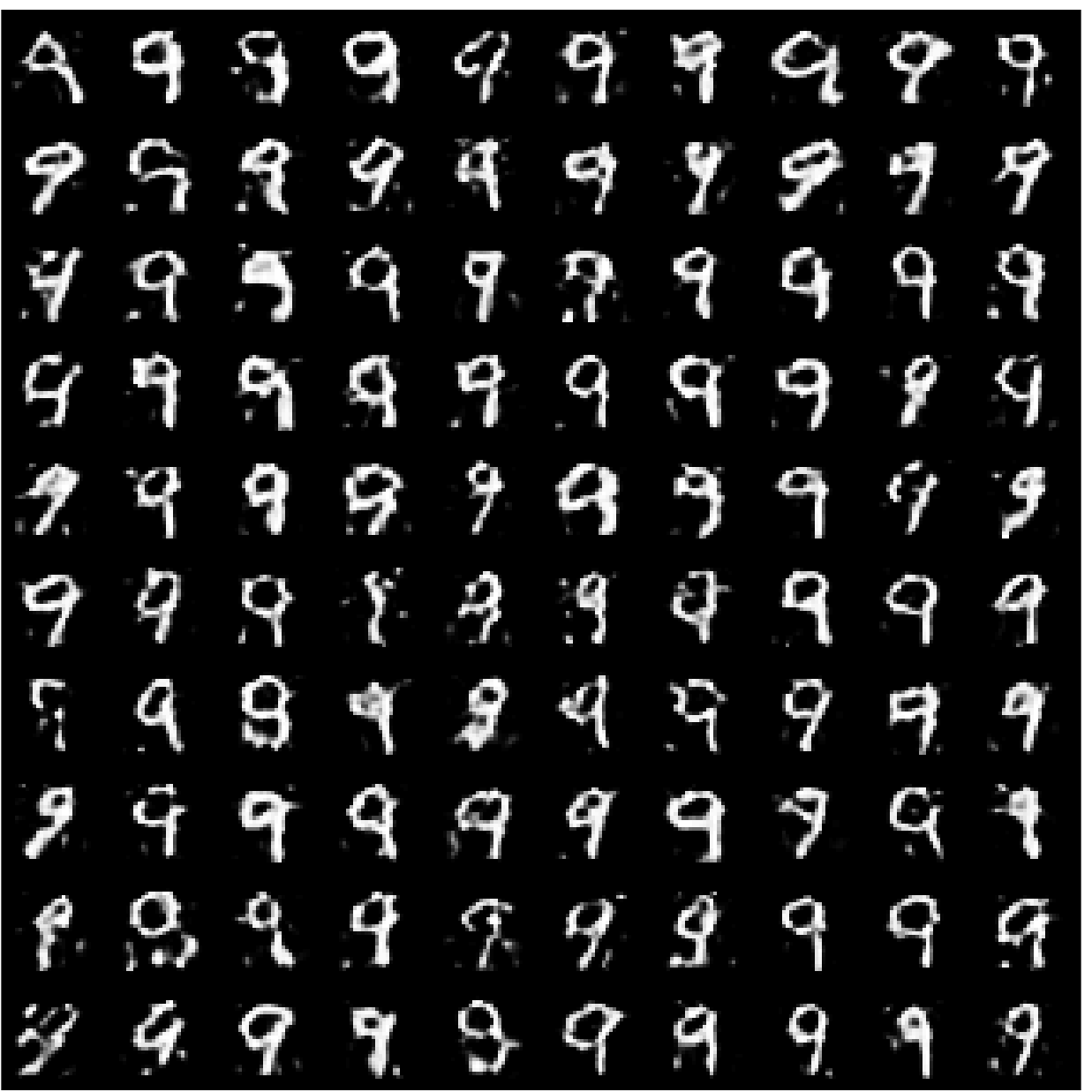}}
\caption{The denoising demonstration of the trained MVRBM over the images of digit 9.}
\label{Figure1Gao}
\end{figure}

In another attempt, we train a MVRBM with $N=20,000$ training samples ($2,000$ samples for each digit) and $T=3,000$ epoches, but the hidden size is set to 25. Based on the bilinear model of the MVRBM, the trained model parameters $U$ and $V$ can be jointly used as filters or feature extractor in terms of dictionary $U^T\otimes V^T$. We show some filters in Figure~\ref{Figure2Gao}(a), from which we can see that the learned filters are quite close to the Haar filters used in image processing.

Then we test the capacity of the learned MVRBM in dimensionality reduction and reconstruction. Figure~\ref{Figure2Gao}(b) shows several examples of the original and its corresponding reconstruction from low dimension representation. Average reconstruction error is 10.8488/(28*28).
\begin{figure}
\centering
\subfigure[The Trained Filters]{\includegraphics[width=0.15\textwidth]{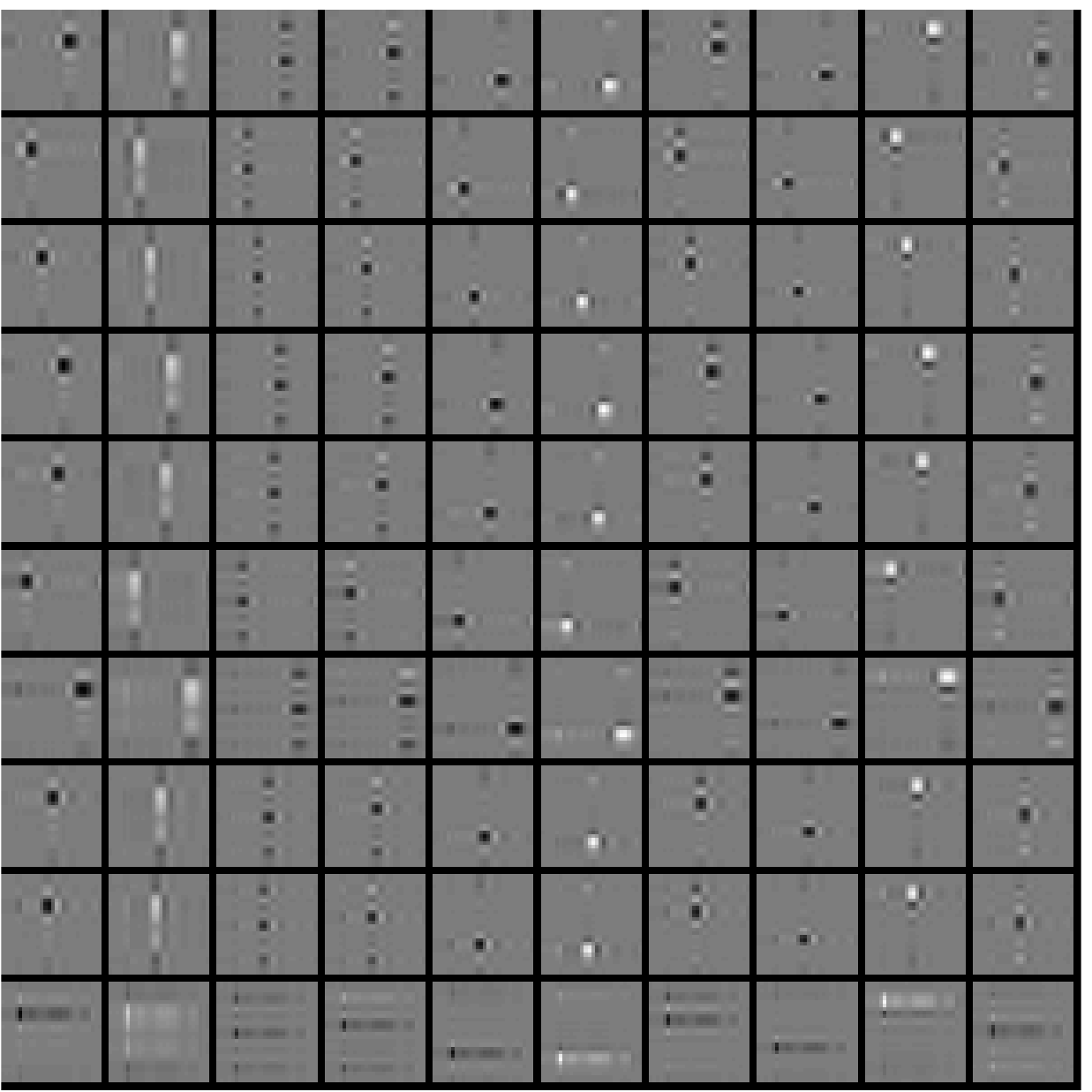}}\;
\subfigure[The Original Images]{\includegraphics[width=0.15\textwidth]{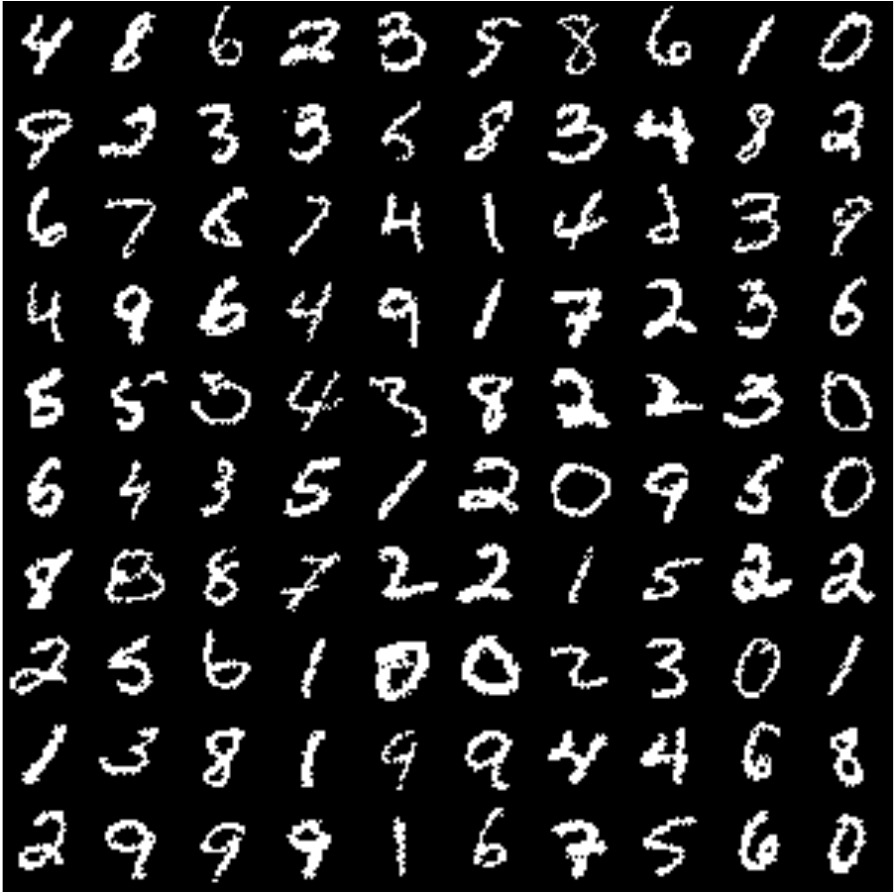}}\;
\subfigure[The Corresponding Reconstruction]{\includegraphics[width=0.15\textwidth]{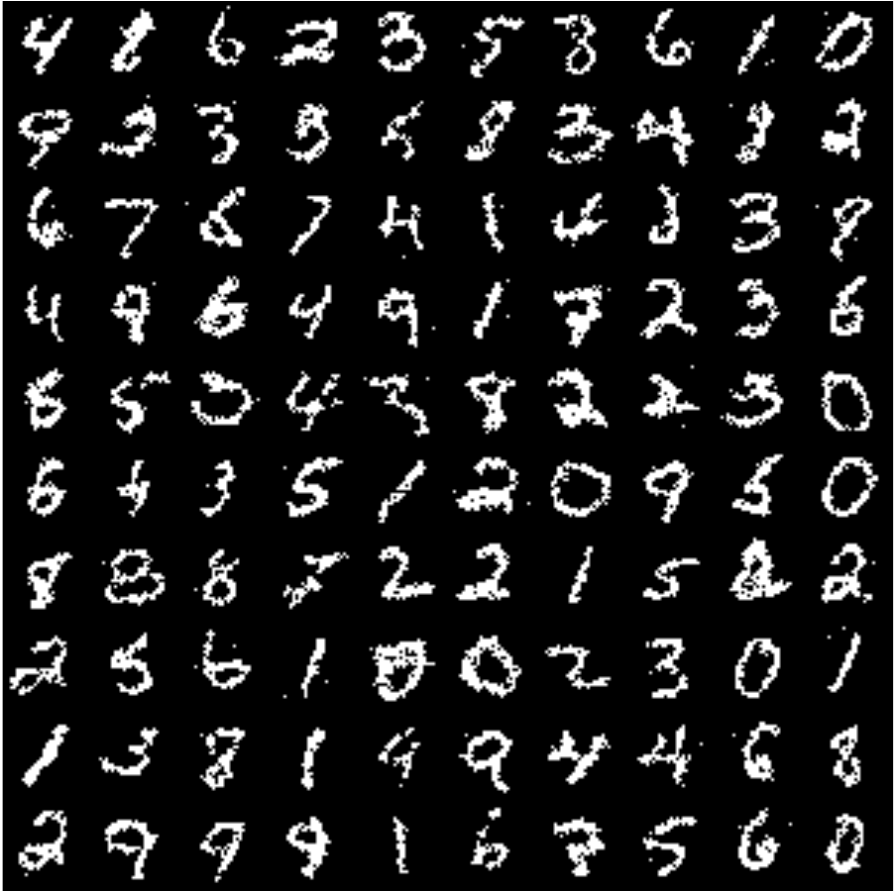}}
\caption{The learned MVRBM filters and the reconstruction demonstration of the trained MVRBM.}
\label{Figure2Gao}
\end{figure}

\subsection{Experiment 2: Handwritten Digit Classification}

The dataset used in Experiment 1 has been widely used for testing and evaluating classification or clustering algorithms~\cite{ChazalTapsonSchaik2015,Deng2012,KangGonugondlaMin-SunShanbhag2015}. In this experiment, we use this dataset to evaluate how well the proposed MVRBM is in feature extraction. In fact, the states over hidden layer can be regarded as new features of observed data. These new features will be piped into a process to train a classifier. As most existing classifiers are designed for vectorial data, in this experiment, the hidden matrix features given by MVRBM will be concatenated into vectors and then use the $K$ Nearest Neighbor ($K$-NN) Classifier with $K=1$ for classification.

We assess how different model training settings impact the 1-NN classifier performance. Under different training settings, we first train an MVRBM, use it as a feature extractor and then conduct 1-NN classification over all the testing digits. Finally we report the classification error rates.

In the first test, we fix the size of hidden units at $25\times25$ and iteration number for $T=2,000$ while varying the number of training samples from $100$ to $20,000$. We show the results of classification errors in Figure~\ref{Figure12}(a). Sufficient training samples lead to better classification performance.   In another test, we randomly choose $10,000$ training samples while varying the iteration number from $10$ to $3,000$ for training. The curve in Figure~\ref{Figure12}(b) shows the change of classification errors over the iteration numbers. We can observe that learning MVRBM model has been stable after 70 iterations. Particularly for the iteration numbers from $300$ to $3,000$, the classification error rate only changes from $0.0571$ to $0.0520$. Based on these observations, we recommend $N=20,000$ and $T=3,000$ in the most of the following experiments while comparing with other models.  In our experiment, we also note that the MVRBM with $50,000$ training samples gives a quite good accuracy of $0.0359$ although this accuracy goes up to $0.1387$ with only 600 training samples.

\begin{figure}
 \centering
 \subfigure[fixed $T=2000$] {\includegraphics[width=0.23\textwidth]{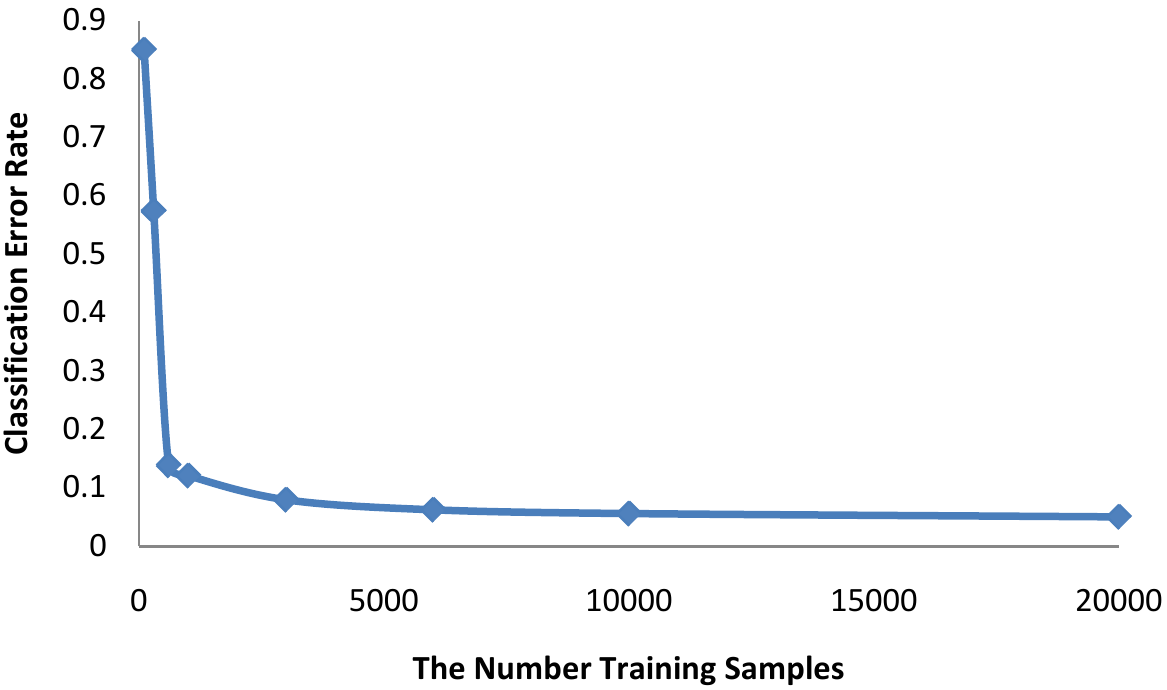}}\;
 \subfigure[fixed $N=10000$]{\includegraphics[width=0.23\textwidth]{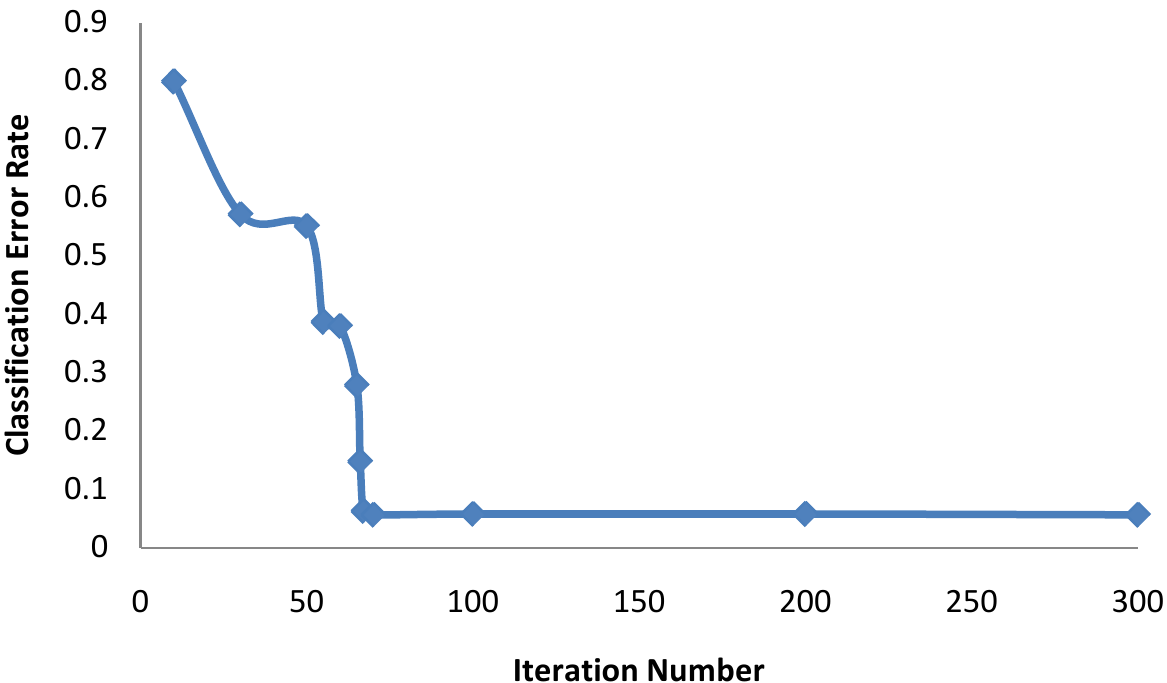}}
 \caption{Classification Errors vs $N$ (a) and $T$ (b).}
 \label{Figure12}
\end{figure}

Finally we compare the performance of the proposed model against other state-of-the-art methods which include the Deep Neural Network based drop-out method (DNN)~\cite{SrivastavaHintonKrizhevskySutskeverSalakhutdinov2014}, the Deep Belief Networks (DBN)~\cite{HintonS.Teh2006}, the Convolutional Neural Networks (CNN)~\cite{LeCunKavukcuogluFarabet2010} and the Sparse Autoencoder (SAE)~\cite{Andrew2011}.
These compared methods are implemented in the DeepLearn Toolbox which is available online at \url{https://github.com/rasmusbergpalm/DeepLearnToolbox} and we use their default parameter settings.
Figure~\ref{figure11}(a) and (b) show the overall results for which we fix the iteration number $T=3,000$ as suggested for MVRBM. The observation we can make from two figures is that with the sufficient training samples the newly proposed MVRBM is comparable to all the other methods while the new model outperforms others in the cases of few samples (less than 10,000). We believe this is due to the fact that the MVRBM has much less model parameters than other models, thus it is much more immune to overfitting.
 \begin{figure}
\centering
 \subfigure[Smaller $N$'s]{\includegraphics[width=0.23\textwidth]{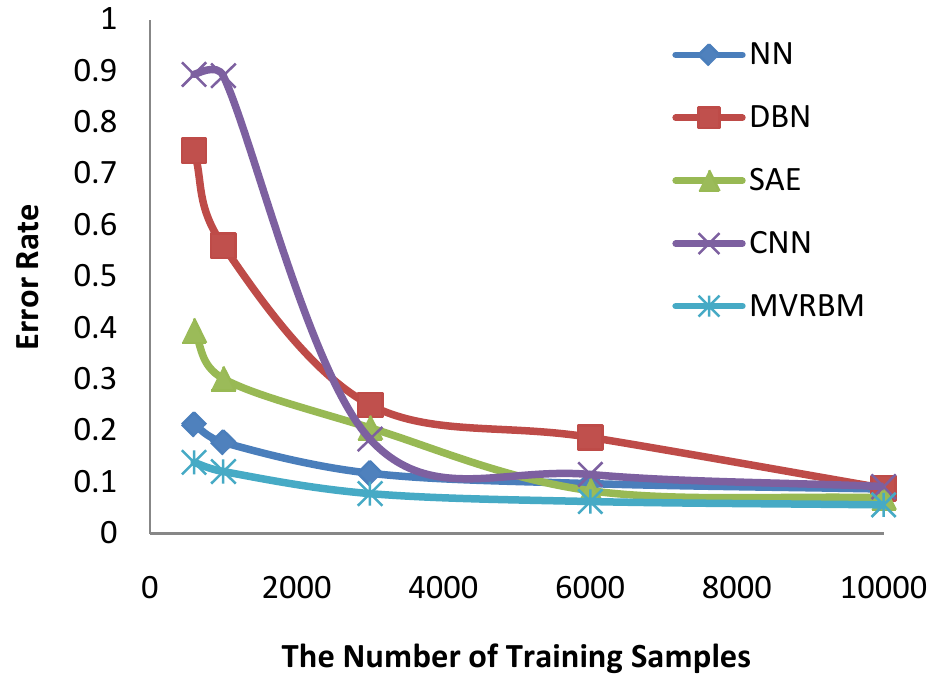}}
 \subfigure[Larger $N$'s]{\includegraphics[width=0.23\textwidth]{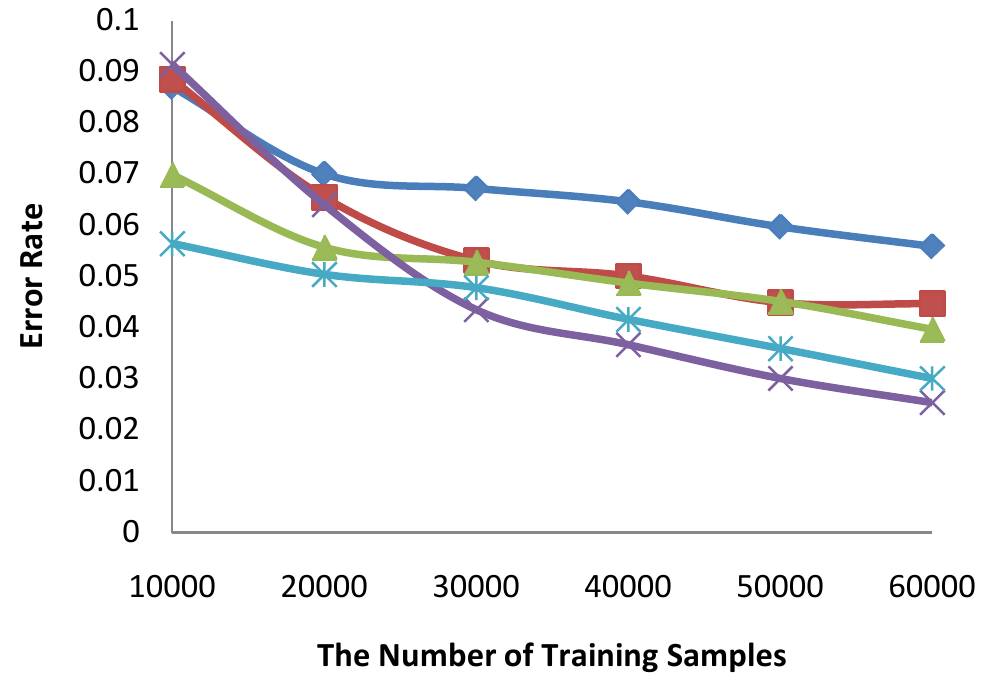}}
\caption{The Classification Errors vs $N$.}
\label{figure11}
\end{figure}

\subsection{Experiment 3: Image Super-resolution}

In this experiment we apply our Mutlimodal MVRBM model for image super-resolution. We follow the same setting used in~\cite{YangWrightHuangM2010} to prepare training data. The training patches are randomly taken from 69 Miscellaneous color images which are available online at \texttt{http://decsai.ugr.es/cvg/dbimagenes/}.

Each training sample consists of a high resolution patch $X$ (the raw image from luminance Y channel in the YCbCr color space) and four low resolution patches which are the directives of images in x-, y-, xx- and yy-directions, denoted by $Z^1$, $Z^2$, $Z^3$ and $Z^4$ of the Y channel. Hence at the visible layer we have five matrix patches $(X, Z^1, Z^2, Z^3,Z^4)$. The energy function defined in \eqref{EnergyMultiview} can be appropriately extended to cope with this case.

We select $N$ training samples from the Miscelaneous database, denoted by
$\mathcal{D}=\{ (X_1, Z^1_1, Z^2_1, Z^3_1,$ $Z^4_1),  \cdots, (X_N, Z^1_N, Z^2_N, Z^3_N, Z^4_N)\}$. In this experiment, we randomly sample $N=10,000$ training patches and try different image patch sizes of $10\times 10$, $15\times15$, $20\times20$, $30\times $ and $35\times 35$, all with a magnification factor of 2. The hidden size is fixed to 20 to demonstrate the potential of using overcomplete dictionary. We found this hidden size gives better results in all the tests conducted.
Figure~\ref{Figure3Gao}(a) shows sample patches $X$ of size 15 and Figure~\ref{Figure3Gao}(b) shows the learned $U$ and $V$ in terms of $U^T\otimes V^T$.

After training MMVRBM for each case, we use the following strategy to conduct super-resolution inference. Given a low resolution feature input $Z= (Z^1, Z^2, Z^3, Z^4)$, we first use any simple super-resolution algorithm
such as interpolation based methods to get an estimate $X^0$ of the desired super-resolution patch $X$. Then take as the input(s) both $Z$ and $X^0$ to the visible layer of a trained MMVRBM and run the MMVRBM training algorithm to transfer message from
the visible layer to the hidden layer to get the variable $Y$. Following that, the message $Y$ can be transferred back from the hidden layer to the visible layer and the super-resolution results can be taken from those $X$ units. In general, this gives a faster inference algorithm as demonstrated in our experiments. While necessary, this process can be run several cycles to reach equilibrium. The similar idea has been used in~\cite{YangWangLinCohenScottHuang2012} where a neural network is trained as a separate post-procedure.

We apply this inference for the super-resolution on Lena image in size $256\times 256$. Table~\ref{Table3Gao} shows the reconstruction errors for each case. Based on the results we recommend using patch size 30 for general super-resolution. In Figure~\ref{figure5lena}, we compare our method with several other methods on  Lena for super-resolution. In this experiment we fix the patch size to $15\times15$ and hidden size to $20\times20$. The size of low-resolution input image is $256\times256$. The PSNR of our method is $35.3006$dB, much higher than $34.1282$dB from the classic bicubic interpolation. As our model is in bilinear format which is a sub-model of the full linear model such as the \textit{Super-resolution via Sparse Representation} (SR) algorithm in~\cite{YangWrightHuangM2010}, the PSNR of our method is slightly inferior to the SR method, however the reconstruction time is much better than theirs\footnote{As the bicubic interpolation is implemented in Matlab and highly optimized, we did not report its super-resolution recovery time.}.
\begin{figure}
\centering
\subfigure[High Resolution Patches]{\includegraphics[width=0.20\textwidth]{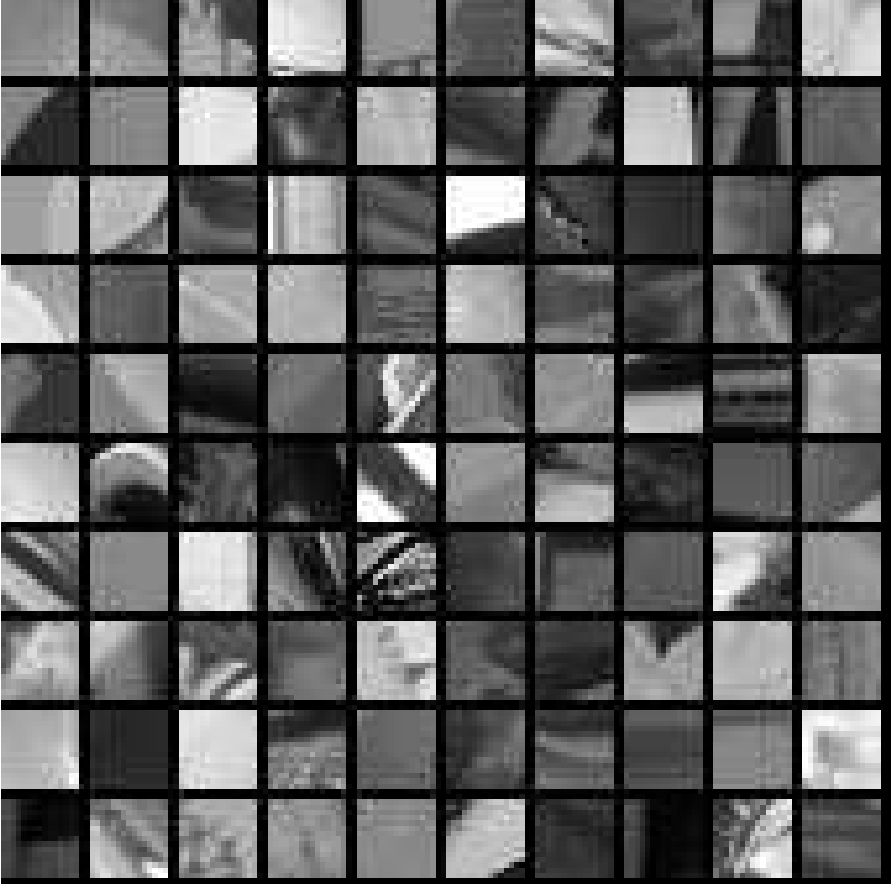}}\;
\subfigure[The learned $U^T\otimes V^T$]{\includegraphics[width=0.20\textwidth]{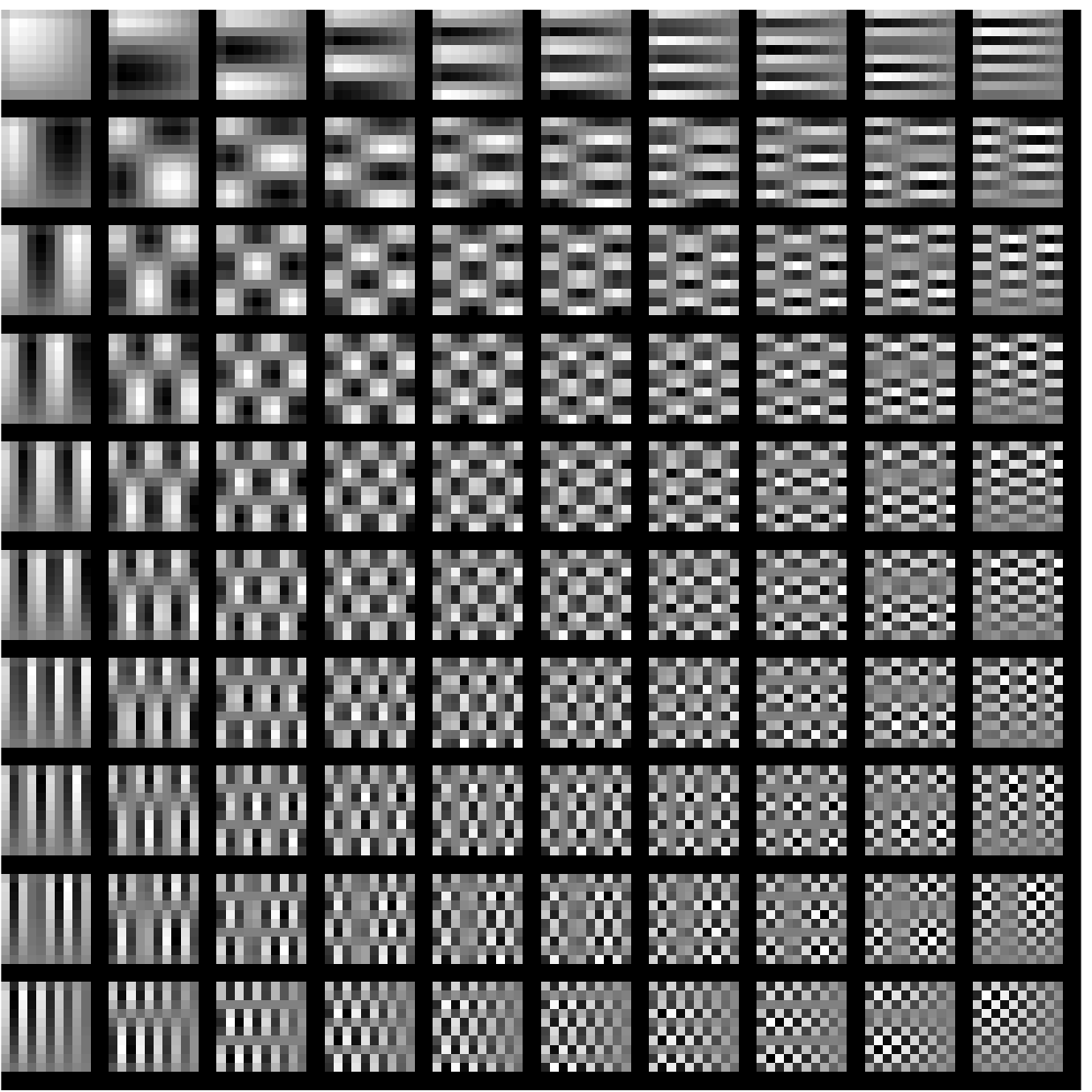}}
\caption{The selected training patches and the learned filters or dictionary in terms of $U^T\otimes V^T$. }
\label{Figure3Gao}
\end{figure}
\begin{table}
  \centering
  \begin{tabular}{|c|c|c|}
\hline
Patch Size& Hidden Size & PSNR(dB)\\[0.5ex]\hline
$10\times10$&$20\times20$&	35.1621\\\hline
$15\times15$&$20\times20$&	35.3227\\\hline
$20\times20$&$20\times20$&	35.3555\\\hline
$30\times30$&$20\times20$&	35.3606\\\hline
$35\times35$&$20\times20$&	35.3564\\\hline
\end{tabular}
  \caption{MVRBM models for different patches}\label{Table3Gao}
\end{table}

More tests have been conducted for natural image super-resolution, which are in the supplementary document to save the space of the paper. Here we only present one set of experiment as an example in Figure~\ref{natural}. The results show that the faster reconstruction of the proposed model against the SR algorithm while maintaining comparable reconstruction accuracy.

\begin{figure}
\centering%
\includegraphics[width=0.065\textwidth]{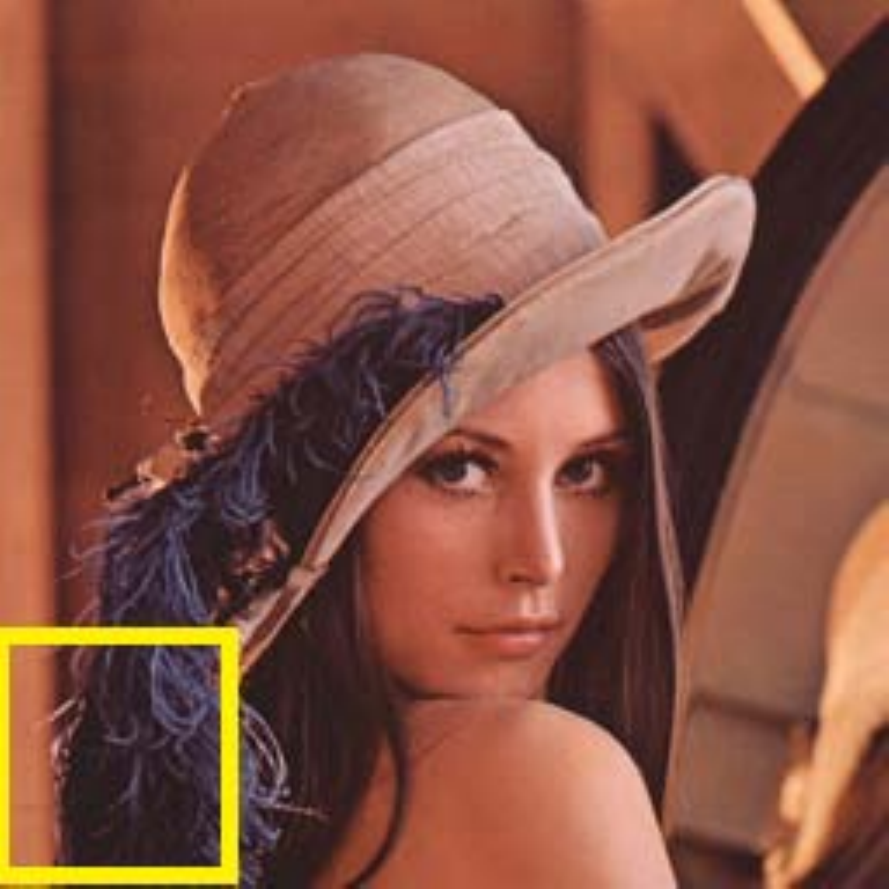}
\includegraphics[width=0.13\textwidth]{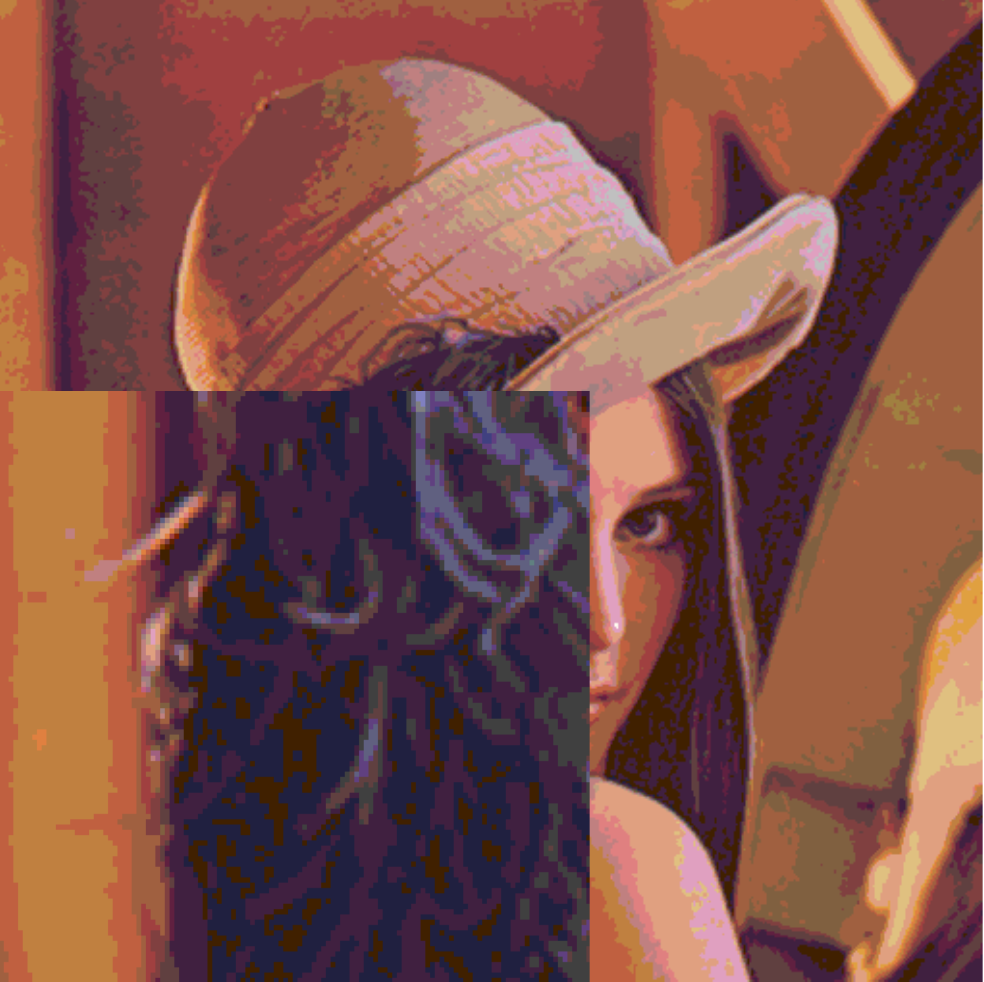}
\includegraphics[width=0.13\textwidth]{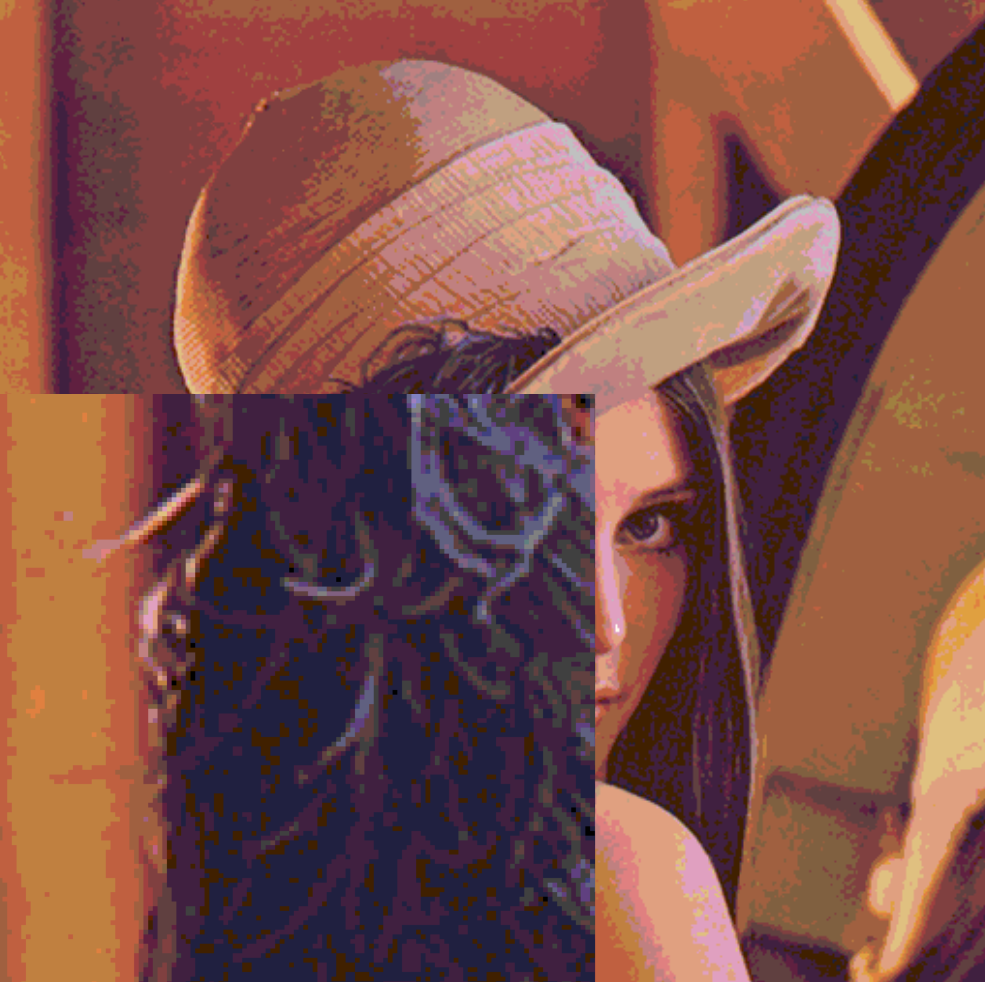}
\includegraphics[width=0.13\textwidth]{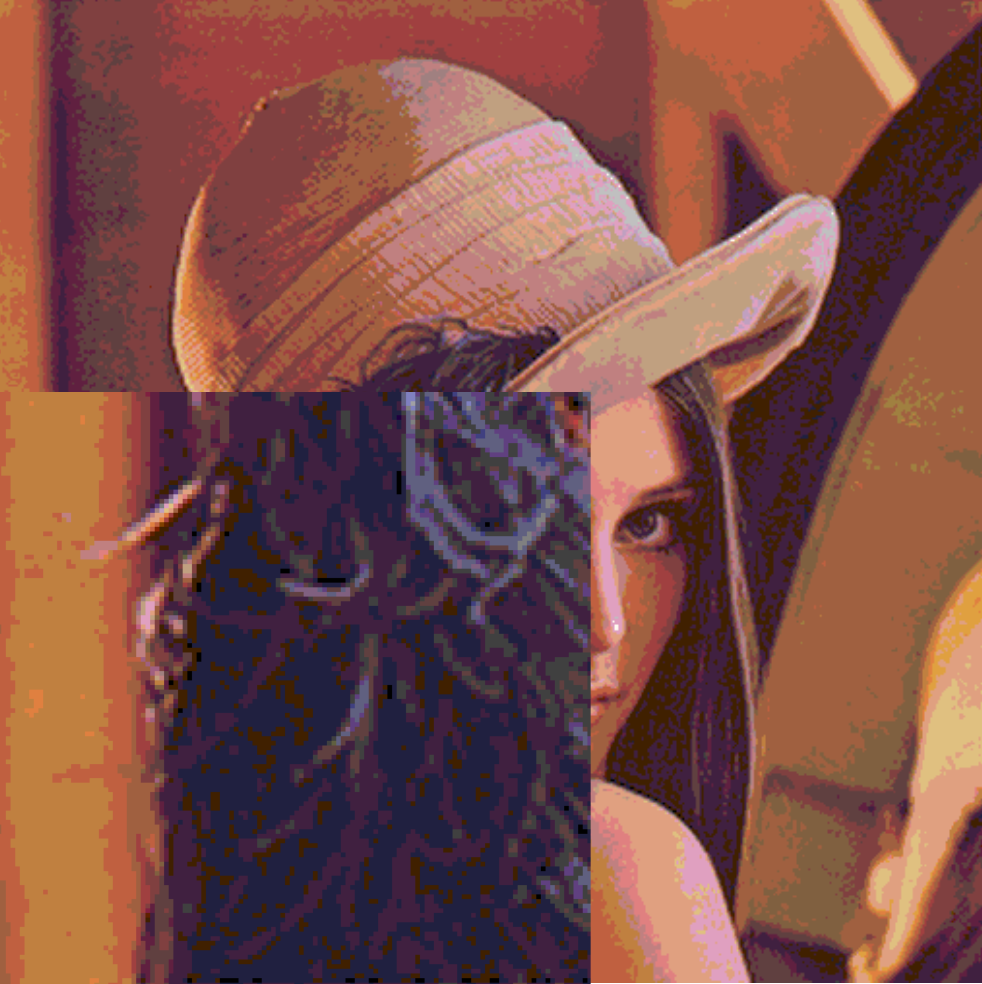}
\caption{Results of Lena image magnified by a factor of 2  and the corresponding RMSEs, Testing time and PSNR. Left to right: Input $(256\times256)$, Bicubic Interpolation (RMSE: 5.0134; PSNR: 34.1282dB), SR~\cite{YangWrightHuangM2010} (RMSE: 4.0900; Time: 679.529s; PSNR: 35.8963dB) and our Method (RMSE: 4.3804; Time: 36.116s; PSNR: 35.3006dB).}
\label{figure5lena}
\end{figure}

\begin{figure}
\centering%
\includegraphics[width=0.065\textwidth]{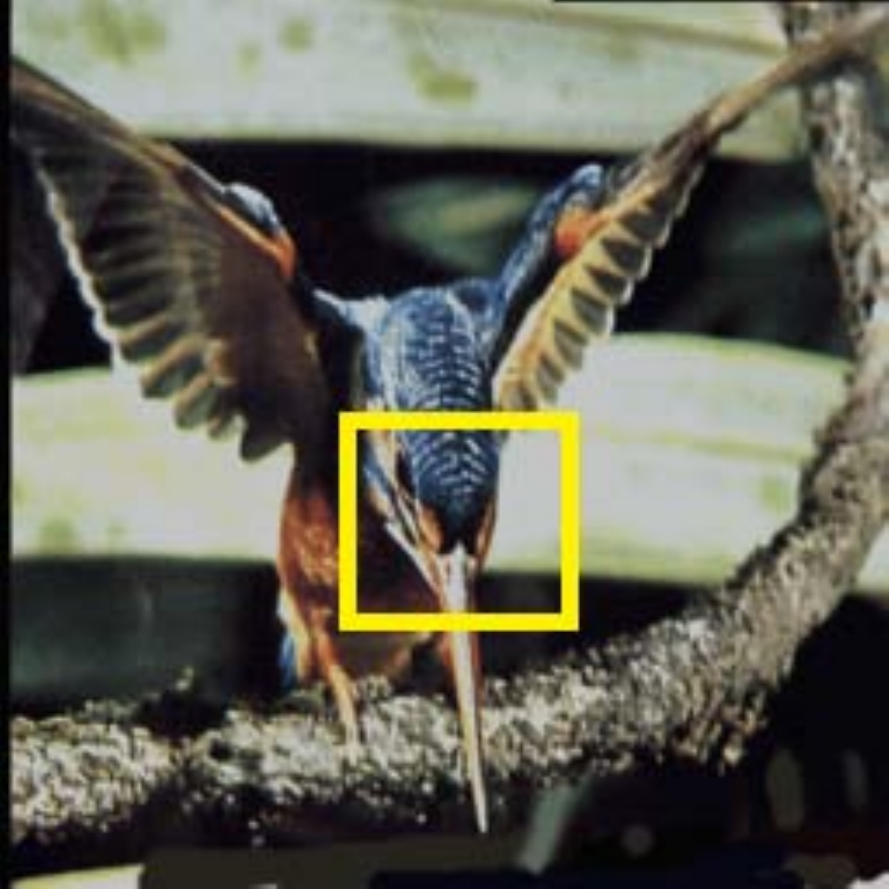}
\includegraphics[width=0.13\textwidth]{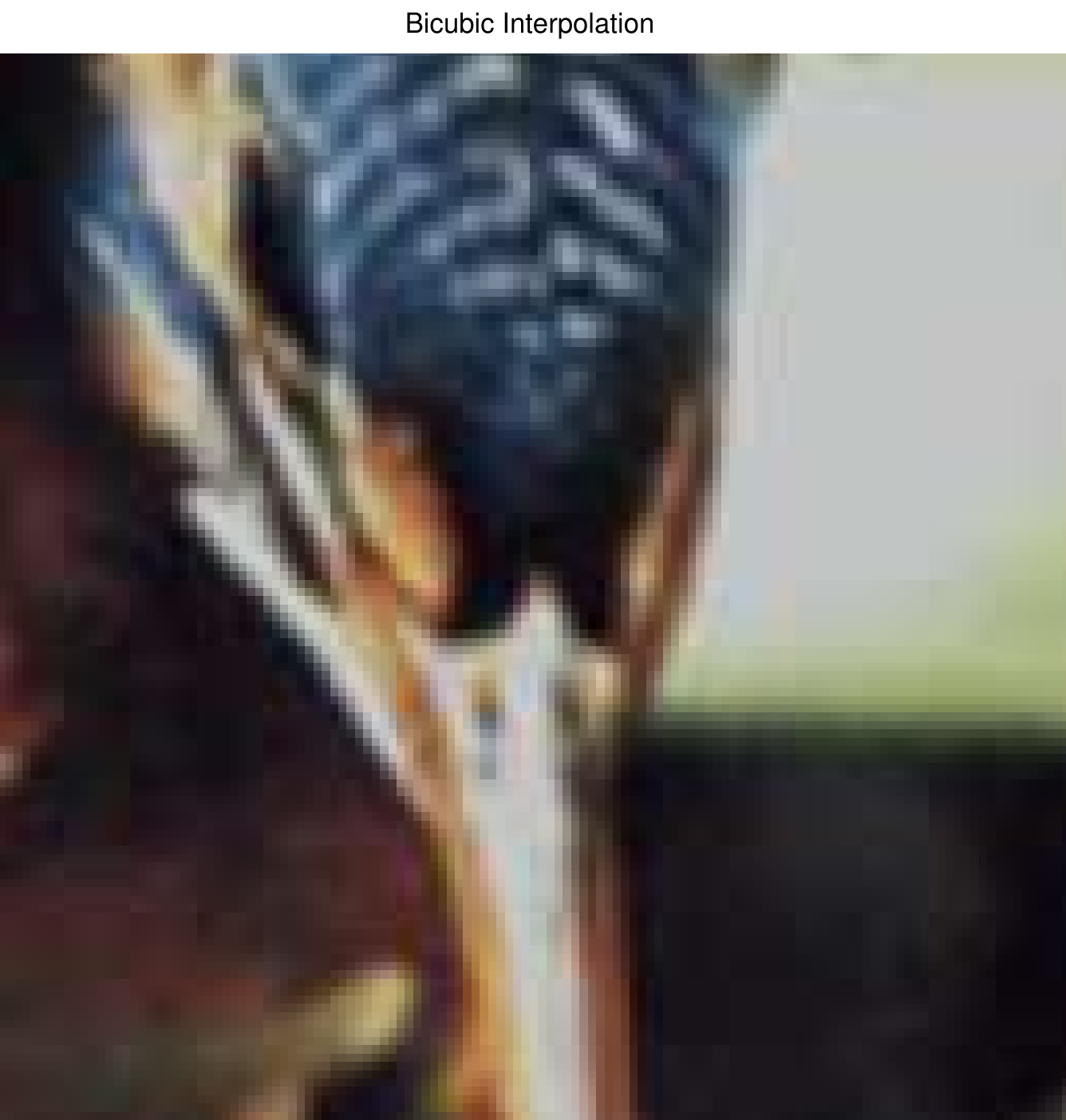}
\includegraphics[width=0.13\textwidth]{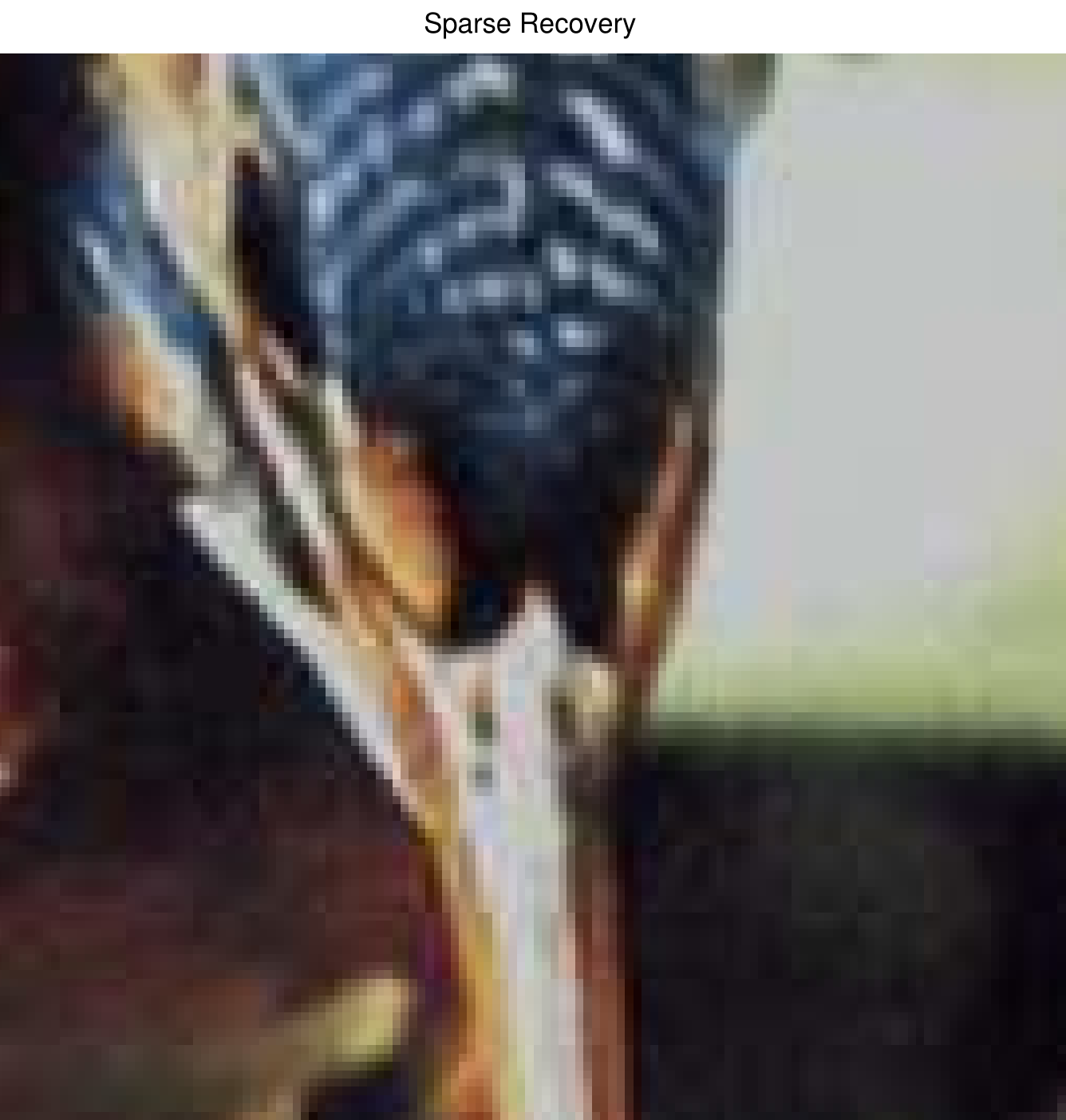}
\includegraphics[width=0.13\textwidth]{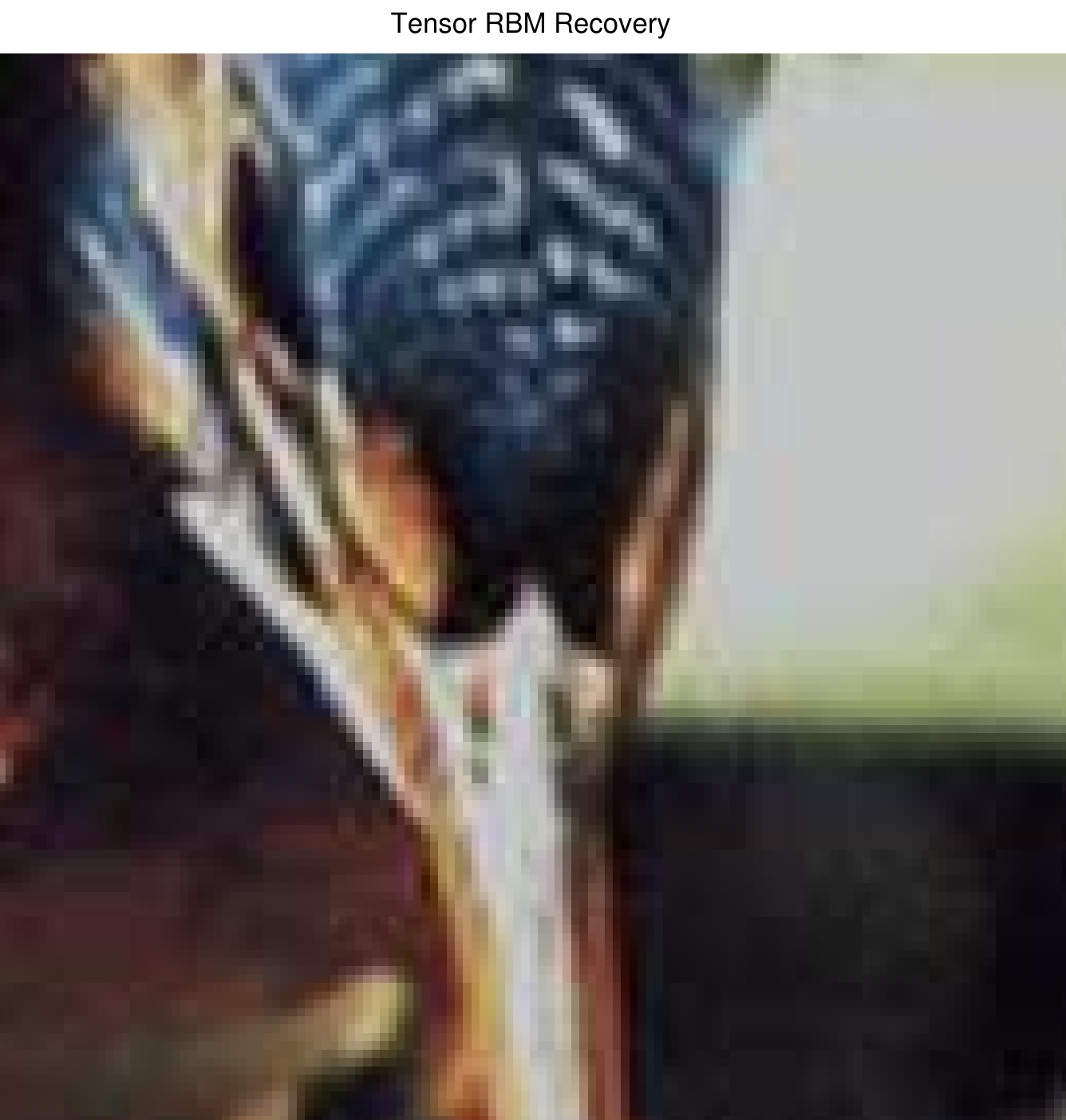}
\caption{Results of Bird image magnified by a factor of 2  and the corresponding RMSEs, Testing time and PSNR. Left to right: Input$(256\times256)$, Bicubic Interpolation (RMSE: 4.7932; PSNR: 34.5184 dB), SR~\cite{YangWrightHuangM2010} (RMSE: 3.7975; Time: 748.225s; PSNR: 36.5409dB) and our Method (RMSE: 3.9459; Time: 35.196s; PSNR: 36.2514dB).}
\label{natural}
\end{figure}

\section{Conclusions}\label{Sec:6}
In this paper, we proposed a novel model called Matrix Variate Restricted Boltzmann Machine (MVRBM) for 2D matrix variate data  by defining a bilinear connection between matrix variate visible layer and matrix variate hidden layer. Different from the traditional RBM which vectorizes the 2D matrix variate, this new model can make good use of spatial information in 2D matrix data, and be easily extended to any higher order tensor variate data.

In order to learn model parameters in MVRBM, we express the multiplicative interaction between visible and hidden units as a specified structure, and thus the number of free parameters in the model is significantly reduced, compared to the corresponding vectorized RBM models. The relevant learning algorithm for the new model has been investigated.

The experiments have demonstrated the new model is comparable to the classic RBM while maintaining good training and inferring computational complexity. This has been particularly demonstrated in the application to the image super-resolution problem.
Our model can be easily incorporated into a deeper structure. Using the deeper structure we may get more abstract feature and better performance. Our future work is to apply the method into the field of the deep learning and the construction of the deep neural network structures.

{\small

}

\end{document}